\documentclass{article}
\usepackage{spconf,amsmath,amssymb,amsthm,booktabs,graphicx}
\usepackage{tikz}
\usepackage[hidelinks]{hyperref}
\newcommand{\TV}{d_{\mathrm{TV}}}
\newtheorem{theorem}{Theorem}

\begin{document}
\ninept
\title{ADAPTIVE SAFETY FILTERING FOR FROZEN ACC POLICIES VIA CONFORMAL RESIDUAL CALIBRATION}
\name{\begin{tabular}{@{}c@{}}
Zhiruo Zhou$^{1,2}$\raisebox{.35ex}{\scriptsize *}, Rigaudiere Z. Li$^{1}$\raisebox{.35ex}{\scriptsize *}\raisebox{.35ex}{\scriptsize †}, Chen Xiwen$^{1,3}$\raisebox{.35ex}{\scriptsize *},\\
Yucheng Chen$^{2}$, Xiaojun Zhu$^{1}$, Houde Liu$^{1}$\raisebox{.35ex}{\scriptsize †}
\end{tabular}
\thanks{\fontsize{9pt}{10.8pt}\selectfont \textsuperscript{*} These authors contributed equally to this work.\\
\textsuperscript{†} Corresponding authors: Rigaudiere Z. Li and Houde Liu.\\
This work was supported by the Shenzhen Science and Technology Program (Grant No. RCJC20210706091946001) and the Shenzhen Science and Technology Program (Grant No. ZDCY20250901104207008).}}
\address{$^{1}$Tsinghua Shenzhen International Graduate School, Tsinghua University, Shenzhen, China\\
$^{2}$Wuhan University of Technology, Wuhan, China\\
$^{3}$Shanghai Jiao Tong University, Shanghai, China}
\maketitle

\begin{abstract}
Frozen adaptive cruise control (ACC) policies can violate constraints when
deployment dynamics differ from their training conditions. We propose
residual-aware conformal action filtering (RACF), which calibrates residuals of
a fixed nominal predictor and converts their quantile into an operating margin
for finite-model action projection. Completed transitions update margins and
candidate selection without retraining the policy. In a registered comparison
over 2,400 controller--trial units, Adaptive RACF achieves 94.3\% episode
safety, improving by 19.9 percentage points over the evaluated nominal CBF-QP
baseline while reducing projection frequency from 8.11\% to 6.63\%. A
controlled study isolates a 4.54-point improvement from residual-margin
injection. In a separate matched-hardware evaluation, Adaptive reduces mean
amortized rollout time by 21.2\% relative to Robust CBF-QP, with 161/180 versus 170/180 safe
episodes. We characterize conditions linking one-step residual coverage to
constraint satisfaction and quantify the observed safety--computation
trade-offs.
\end{abstract}
\begin{keywords}
Safety filtering, conformal prediction, adaptive cruise control, distribution shift, offline reinforcement learning
\end{keywords}

\section{Introduction}
\label{sec:intro}
Offline reinforcement learning (RL) learns from previously collected data
\cite{levine2020offline}. IQL avoids evaluating unseen actions
\cite{kostrikov2022iql}, whereas CQL penalizes overestimated values
\cite{kumar2020cql}. Cross-domain methods address dynamics shifts during policy
learning \cite{qiao2026droco}. These approaches do not remove the need for a
runtime safety interface around a frozen controller. In ACC, mass, road grade,
and actuator gain alter ego dynamics, whereas braking and stop--go motion
change the exogenous lead trajectory, making grade--lead-motion combinations
hard cases. A deployed filter must absorb both forms of mismatch while limiting
policy distortion.

Runtime safety mechanisms include shielding \cite{alshiekh2018shielding} and
continuous-action safety layers \cite{dalal2018safe}. CBF-QPs
\cite{ames2017cbf} and predictive safety filters \cite{wabersich2021psf}
enforce model-based constraints.
Conformal prediction calibrates residual uncertainty
\cite{angelopoulos2023gentle}, including under distribution shift
\cite{tibshirani2019covariate,barber2023beyond}. Recent studies connect this
uncertainty to safe planning and control
\cite{lindemann2023safeplanning,zhou2024safetycritical}. Related work addresses
robust verification and shifted environments
\cite{zhao2024robuststl,beirami2026,rahaman2026shift}.
We study runtime action filtering for frozen ACC policies under dynamics shift.
Changes in vehicle parameters and lead-vehicle motion can make nominal
predictions inaccurate, motivating a filter that responds to observed mismatch
without retraining the controller. RACF calibrates residuals of a fixed nominal
predictor, maps the resulting quantile to an operating margin, and projects
policy actions through a finite set of dynamics constraints. Completed
transitions update the margin and candidate selection for subsequent actions.
This residual-to-action interface separates policy optimization from
deployment-time intervention and permits direct comparison of constraint
satisfaction, action modification, and computational cost.

\noindent\textbf{Contributions.} We make three contributions. First, we
introduce a residual-to-action interface that combines conformal residual
calibration, online margin updating, and candidate selection without changing
policy parameters. Second, we characterize sufficient conditions for
translating marginal one-step residual coverage into constraint satisfaction by
the final applied action. Third, paired ACC comparisons, controlled component
studies, and matched-hardware measurements distinguish margin benefits from
full-system safety and computational outcomes.

\begin{figure*}[t]
\centering
\includegraphics[width=\textwidth]{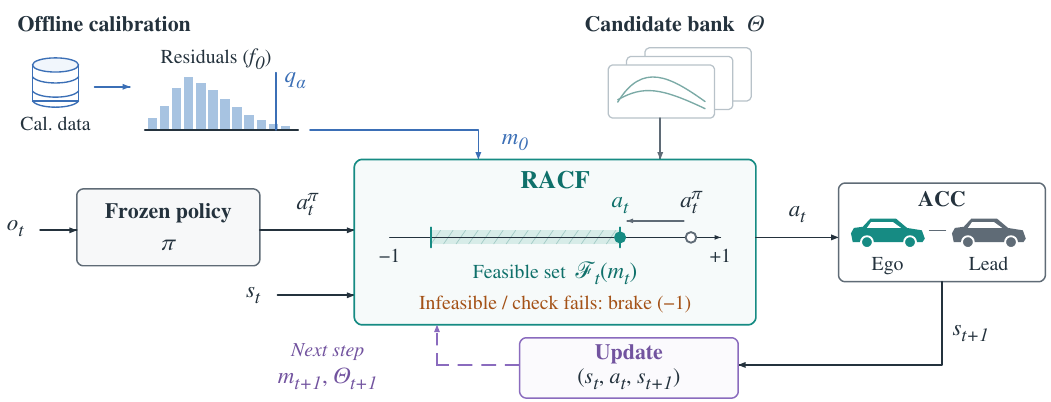}
\caption{RACF data flow. Offline calibration uses residuals of a fixed nominal
predictor to initialize the operating margin. At runtime, the policy proposes
$a_t^\pi$ from $o_t$, while the filter uses $s_t$ and selected candidate
constraints to compute $a_t$. Completed transitions update the margin and
candidate set for the next step. An infeasible solve or failed post-check
invokes bounded maximum braking.}
\label{fig:method}
\end{figure*}

\section{Problem Formulation}
Consider a finite-horizon controlled process with observation $o_t$ and frozen
policy $a_t^\pi=\pi(o_t)$. We seek an applied action $a_t$ in a bounded set
$\mathcal A$ that preserves the filter margin
$h_{\mathrm{filt}}(s_{t+1})\geq0$ while staying close to $a_t^\pi$. Policy
training precedes calibration and evaluation, and filtering does not update the
policy weights.

The ACC state is $s=(v_e,v_l,g,v_d,\phi)$, comprising ego speed, lead
speed, inter-vehicle gap, desired speed, and road grade. Calibration and
filtering use fixed dimensionless coordinates. Speeds are divided by
$u_v=1$~m/s, gap by $u_g=1$~m, and grade by $u_\phi=1^\circ$. These reference
units preserve the implemented numerical values; bars on normalized states are
omitted below.

The normalized action $a\in[-1,1]$ maps linearly to the evaluated actuator range
$[F_{\min},F_{\max}]=[-6000,4000]$ N via
$F(a)=F_{\min}+(a+1)(F_{\max}-F_{\min})/2=-1000+5000a$ N. Under an actuator-gain
shift $\gamma$, the simulator applies $F_\gamma(a)=-1000+5000\gamma a$ N.
At $\Delta t=0.1$ s, the predictor models drag, rolling resistance, road grade,
and the mass-dependent force-to-acceleration conversion
\cite{rajamani2012vehicle}. The score predictor $f_0$ uses a mass of 1500 kg,
the observed grade, actuator gain 0.7, and constant lead speed. Its drag and
rolling coefficients are 0.24 and 0.010, whereas the simulator uses 0.30 and
0.012. Candidate predictors $f_\theta$ vary mass, grade, and gain over a finite
set $\Theta$.

We define dimensionless CTH components
$\bar h_g=(g-d_0-T_hv_e)/u_g$ and
$\bar h_v=(v_{\max}-v_e)/u_v$ from the standard ACC spacing policy
$d_{\mathrm{CTH}}=d_0+T_hv_e$ \cite{wu2020spacing,molnar2023cruise}. We use
$d_0=3$ m, $T_h=1.5$ s, and $v_{\max}=30$ m/s, giving
\begin{equation}
 h_{\mathrm{filt}}(s)=\min\{\bar h_g(s),\bar h_v(s)\}.
 \label{eq:margin}
\end{equation}
Closing-speed and braking-distance models provide richer risk definitions
\cite{molnar2023cruise,shalevshwartz2017rss}. Under the
Euclidean norm in the fixed dimensionless coordinates, $h_{\mathrm{filt}}$ is
Lipschitz with
$L_h=\sqrt{1+(T_hu_v/u_g)^2}=\sqrt{3.25}$.
Changing reference units or coordinate weights requires recalibration of the
score, margin, and Lipschitz conversion.

The episode endpoint $E_{\mathrm{safe}}(\tau)$ equals one only when every
executed transition satisfies $\bar h_g(s_{t+1})\geq0$ and no collision occurs.
A collision-terminated episode is unsafe; no undefined post-termination state
is evaluated. The speed component $\bar h_v$ remains a filter constraint.
Speed tracking, headway-violation duration, and comfort are separate outcomes.

\section{RACF Method}
\subsection{Residual calibration and action projection}
RACF separates uncertainty calibration from policy optimization. Residuals of
the fixed nominal predictor summarize observed one-step mismatch, and their
calibrated quantile initializes the operating margin. Projection seeks the
smallest feasible change to the policy action under a finite bank of dynamics
predictors. Online residual windows let the margin respond to recent mismatch,
while candidate selection changes which hypotheses constrain the projection.
The predefined bank remains part of the interface, including residual tracking
and candidate recovery (Fig.~\ref{fig:method}). For an independently collected transition
$Z=(s,a,s^+)$ in the dimensionless coordinates of Sec.~2, define
\begin{equation}
 S_0(Z)=\|s^+-f_0(s,a)\|_2,\quad
 k=\lceil(n+1)(1-\alpha)\rceil,\quad q_\alpha=S_{0,(k)},
 \label{eq:quantile}
\end{equation}
where $S_{0,(k)}$ is the $k$th order statistic of the $n$ calibration scores;
$q_\alpha=+\infty$ when $k>n$. This score captures aggregate predictor mismatch,
including unmodeled lead motion. Conformal risk control provides broader
loss-based constructions \cite{angelopoulos2024crc}. In our experiments,
$n=500$ nominal calibration transitions and $\alpha=0.1$ yield
$q_\alpha=0.09188$. The independent one-step diagnostic uses uniformly random
cruise actions and is separate from the braking and stop--go policy rollouts.

RACF enforces the margin for every active predictor
$\{f_\theta:\theta\in\Theta_t\}$:
\begin{equation}
 \mathcal F_t(m_t)=\{a\in\mathcal A:
 h_{\mathrm{filt}}(f_\theta(s_t,a))\geq m_t,\ \theta\in\Theta_t\}.
 \label{eq:feasible}
\end{equation}
It then minimally modifies the frozen policy proposal:
\begin{equation}
 a_t=\arg\min_{a\in\mathcal F_t(m_t)}\|a-a_t^\pi\|_2^2.
 \label{eq:projection}
\end{equation}
The finite candidate set is
$\Theta=\{1500,1800\}\times\{0,3\}\times\{0.7,1.0\}$ over mass (kg),
grade (degrees), and actuator gain. Static RACF sets
$\Theta_t=\Theta$ and $m_t=\eta q_\alpha$. Base robust projection keeps the
candidate set, objective, bounds, solver, and fallback fixed while setting
$m_t=0$, thereby isolating the residual-margin effect. Default Static and
Adaptive RACF use $\eta=1$; the predictor-aligned comparison instead tests
$m=L_hq_\alpha$.

The score-defining predictor $f_0$ and the constraint predictors $f_\theta$ are
separate implementations. A matching parameter label or an implied constraint
does not establish functional equality to $f_0$.
Section~\ref{sec:coverage-control} states the alignment condition linking the
nominal score to safety. An empty feasible set or a failed post-solve check
invokes bounded maximum braking, which need not preserve this premise
(Fig.~\ref{fig:method}).

\subsection{Adaptive update and operating envelope}
Adaptive RACF updates the candidate set and margin from completed transitions.
After a 20-transition warm-up, it recomputes candidate-specific quantiles from
a 40-transition window every ten transitions and retains at most four
candidates within 0.02 normalized-residual units of the minimum. Observed grade
restricts the grid to values within $0.75^\circ$; if none qualify, that axis
remains unpruned. Outside a recovery period, an upward crossing of $2q_\alpha$
by the largest candidate residual restores the full grid and suspends pruning
for 20 completed transitions; per-step grade filtering continues.
Grade compatibility is applied before projection, and an empty intersection
restores all compatible candidates. Separately, a 100-transition window $W_t$
updates $m_t=\eta\max\{q_\alpha,\widehat q_\alpha(W_t)\}$ on the same schedule.
Each episode starts with empty windows, the full residual-tracker set, and
$m_0=\eta q_\alpha$. After transition $t$, the margin and selector updates run
in that order and affect action $t+1$. These empirical rules use completed
transitions only and do not inherit adaptive conformal validity
\cite{gibbs2021adaptive}.

For a formal finite-horizon construction, define
\begin{equation}
 M_H(\tau)=\max_{0\leq t<H}S_0(Z_t).
 \label{eq:maxscore}
\end{equation}
Envelope RACF uses the fixed margin $B=0.45$,
which is distinct from a calibrated $q_H$.

Nominal and robust CBF-QP minimize action deviation under discrete-time barrier
constraints \cite{agrawal2017discrete}:
$\bar h_j(f_\theta(s_t,a))\geq(1-\kappa)\bar h_j(s_t)$,
$j\in\{g,v\}$. The nominal version enforces one nominal hypothesis, whereas the
robust version enforces all eight. We use $\kappa=0.2$; all filters share the
same action bounds and bounded-braking fallback.

\section{Coverage and Conditional Safety}
\label{sec:coverage-control}
\subsection{One-step residual coverage}
Let $D=(S_{0,1},\ldots,S_{0,n})\sim P^n$ be i.i.d. calibration scores from $P$.
Conditional on $D$, let $Q_D$ be the law of the next deployed score $S_{0,*}$.
\begin{theorem}[Shifted marginal coverage]
For $q_\alpha(D)$ from \eqref{eq:quantile}, if
$\TV(P,Q_D)\leq\epsilon$ almost surely in $D$, then
\begin{equation}
 \mathrm{E}_D Q_D(({-}\infty,q_\alpha(D)])
 =\Pr_{D,S_{0,*}}\{S_{0,*}\leq q_\alpha(D)\}
 \geq1-\alpha-\epsilon .
 \label{eq:tv}
\end{equation}
\end{theorem}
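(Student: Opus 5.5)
The plan is to compare, conditionally on $D$, the deployed probability $Q_D((-\infty,q_\alpha(D)])$ with the corresponding probability under the calibration law $P$. Then we average over $D$ and use the standard exchangeability bound for split conformal quantiles. The equality in \eqref{eq:tv} is the tower property. Conditional on $D$, the event $\{S_{0,*}\le q_\alpha(D)\}$ has probability $Q_D((-\infty,q_\alpha(D)])$ because $q_\alpha(D)$ is $D$-measurable. Taking $\mathrm{E}_D$ of this conditional probability gives the joint probability $\Pr_{D,S_{0,*}}$. Measurability of $q_\alpha$ as an order statistic and of $D\mapsto Q_D$ as a kernel is taken as part of the setup.

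For the inequality, the first step fixes $D$ and uses the definition of total variation on the half-line $A_D=(-\infty,q_\alpha(D)]$. This gives $Q_D(A_D)\ge P(A_D)-\TV(P,Q_D)\ge P(A_D)-\epsilon$ almost surely. When $q_\alpha(D)=+\infty$, both probabilities equal one and the bound is trivial. Taking expectations over $D$ then yields $\mathrm{E}_D Q_D(A_D)\ge \mathrm{E}_D P(A_D)-\epsilon$.

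The second step shows $\mathrm{E}_D P(A_D)\ge 1-\alpha$. Introduce an auxiliary score $S_{n+1}\sim P$ independent of $D$. Then $\mathrm{E}_D P(A_D)=\Pr\{S_{n+1}\le S_{0,(k)}\}$. Since $S_{0,1},\dots,S_{0,n},S_{n+1}$ are i.i.d., they are exchangeable, and the rank of $S_{n+1}$ among the $n+1$ values is uniform (with ties only helping). Hence $\Pr\{S_{n+1}\le S_{0,(k)}\}\ge k/(n+1)\ge 1-\alpha$ for $k=\lceil (n+1)(1-\alpha)\rceil\le n$. This is the classical split-conformal argument, and I would cite it as standard \cite{angelopoulos2023gentle,barber2023beyond} rather than reprove it. Combining the two steps gives $1-\alpha-\epsilon$.

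The main obstacle is the tie-handling in the rank argument. The claim $\Pr\{S_{n+1}\le S_{0,(k)}\}\ge k/(n+1)$ must hold without a continuity assumption. I would first handle ties by noting that $S_{n+1}\le S_{0,(k)}$ holds whenever $S_{n+1}$ is among the $k$ smallest in any fixed tie-breaking order. I would then randomize the tie-breaking uniformly, which preserves exchangeability and gives the bound directly. A secondary point is that the TV assumption must hold almost surely, so that the conditional bound can be integrated. This is exactly the hypothesis stated.
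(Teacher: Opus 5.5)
Your proposal is correct and follows essentially the same route as the paper's sketch. Both use a per-$D$ total-variation bound on the half-line $(-\infty,q_\alpha(D)]$, averaged over $D$, together with the split-conformal rank bound from an auxiliary independent score $\widetilde S_*\sim P$; your randomized tie-breaking and separate treatment of the $k>n$ case supply details that the sketch leaves implicit.
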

\noindent\emph{Proof sketch.} An independent $\widetilde S_*\sim P$ gives the
marginal rank bound. Applying TV to $(-\infty,q_\alpha(D)]$ for each $D$ and
averaging subtracts at most $\epsilon$. TV alone does not supply the rank bound.
This is marginal over $D$ and $S_{0,*}$; it is neither state/history-conditional
nor a simultaneous multi-step statement.

\subsection{Residual-to-safety conversion}
If the final applied action, including fallback, satisfies
$h_{\mathrm{filt}}(f_0(s_t,a_t))\geq m_t\geq L_hq_\alpha$, then on
$S_{0,*}\leq q_\alpha$, Lipschitz continuity gives
$h_{\mathrm{filt}}(s_{t+1})\geq0$. Under Theorem~1 this yields the same
$1-\alpha-\epsilon$ one-step lower bound. Other candidate constraints require
their own discrepancy bound.
On the ACC state domain $v_e\geq0$, $h_{\mathrm{filt}}\geq0$ implies
$g\geq d_0=3$ m, above the 0.5 m collision threshold; outside this domain the
result is only a barrier-satisfaction bound.

\begin{center}
\centering
\setlength{\tabcolsep}{4pt}
\renewcommand{\arraystretch}{0.92}
\begin{tabular}{lll}
\toprule
Condition & Role & Current status \\
\midrule
IID + TV & Assumed & Not estimated \\
$L_h$ & Analytic & Coordinate-derived \\
$f_0$, $m=L_hq_\alpha$ & Structural & Trace-checked \\
Feas. + fallback & Assumed & Violations logged \\
\bottomrule
\end{tabular}
\end{center}

\subsection{Practical realizability and premises}
Episode-level calibration requires a frozen filter, horizon, reset law, and
termination rule. For independent episode maxima $D_H\sim P_H^{n_H}$, a
finite-sample $q_H$ also requires a shift bound to the deployment episode law.
Changing the filter using $q_H$ changes that law and requires fresh calibration
or another bound; Fig.~\ref{fig:evidence}(c) contrasts one-step and
policy-conditioned maxima.

Theorem~1 concerns marginal one-step residual coverage under explicit sampling
and distribution-shift assumptions. Constraint satisfaction additionally
requires the final applied action to satisfy the aligned, Lipschitz-scaled
nominal constraint, including on fallback steps. Empirical online updates do
not automatically preserve these premises. We therefore separate statistical
coverage from executable action conditions; rollout and trace analyses assess
observed behavior rather than establish a closed-loop certificate.

\begin{figure*}[t]
\centering
\includegraphics[width=\textwidth]{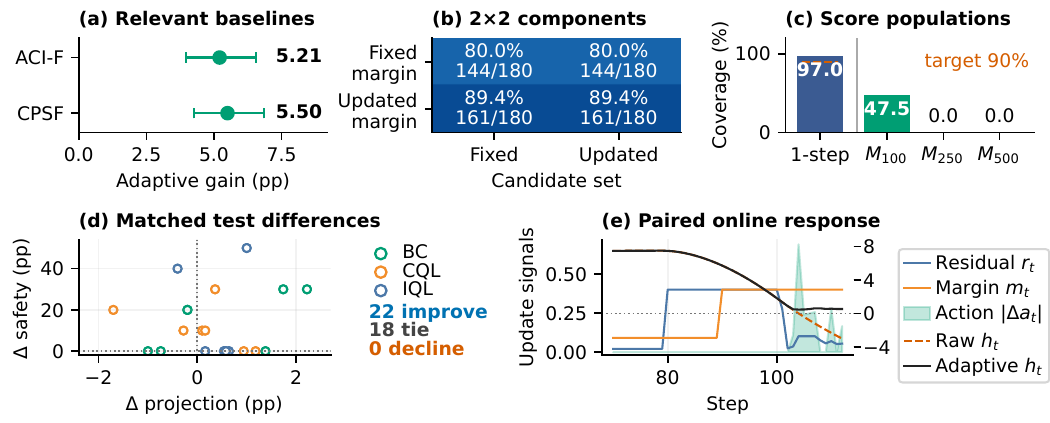}
\caption{ACC \emph{partial-rollout} evidence. (a) Adaptive-minus-control safety
(pp), 95\% shared-reset intervals; positive favors Adaptive. (b) Matched
candidate-set$\times$margin ablation ($N=180$), safety and Safe/$N$.
(c) Independent one-step versus policy-induced episode-max $M_H$ coverage;
the 90\% target is one-step only. (d) Adaptive-minus-Static safety/projection,
40 development-selected choices; colors denote policy families. (e) Paired
trace: $r_t,m_t,|\Delta a_t|$ (left axis), $h_t$ (right axis).}
\label{fig:evidence}
\end{figure*}

\section{Experiments}
\label{sec:experiments}

\subsection{Evaluation setup}
All experiments are ACC \emph{partial rollouts}. Paired comparisons share
policy, condition, scenario, trial, seed, initial state,
and lead trajectory. Policy input is its training observation; the filter uses
simulator state. Episode safety is $E_{\mathrm{safe}}(\tau)=1$.

The registered comparison uses four IQL, four CQL, three BC, and one IDM
controller over four physical conditions, two lead scenarios, and 25 shared
resets per cell (2,400 units; $H=500$). Collision at $g\leq0.5$ m ends an
episode. We report uncertainty intervals \cite{agarwal2021precipice}.
Two-sided 95\% CIs use 10,000 bootstrap draws of shared reset
clusters indexed by condition, scenario, trial, and seed, each carrying all
policies.

With $(m,\phi,\gamma)$ in kg, degrees, and actuator gain, nominal and mass
shifts use $(1500,0,1.0)$ and $(1800,0,1.0)$. Slope and combined shifts use
$(1500,3,1.0)$ and $(1800,3,0.7)$. Lead scenarios are brake and stop--go;
state-estimation error is outside this study.

\subsection{Registered comparison}
Table~\ref{tab:main-results} compares complete configurations. Adaptive is safe
in 2262/2400 episodes (94.3\%) versus 1785/2400 (74.4\%) for nominal CBF-QP, a
19.9-pp gain. Projection frequency decreases from 8.11\% to 6.63\%.

All five methods have zero observed collisions. Fig.~\ref{fig:evidence}(a)
reports separate Adaptive gains over ACI
and CPSF-style of 5.21 pp [3.96, 6.54] and 5.50 pp [4.25, 6.83].

ACI \cite{gibbs2021adaptive} keeps $\Theta$ and updates
$\alpha_{t+1}=\Pi_{[.01,.50]}[\alpha_t+.01(.10-I_t)]$ from a fixed 500-score
pool using $I_t=1\{S_0>q_{\alpha_t}\}$. CPSF-style
\cite{wabersich2021psf} keeps candidates within $q_0=0.09188$ of $f_0$ at
$a_t^\pi$, restoring $\Theta$ if empty. Both share bounds, state, solver, and
fallback; they are frozen before test and are one-step interfaces, not
multi-step PSF reproductions. Intervention rates are outcomes.

\begin{table}[t]
\caption{Separate cohorts. Registered: $N=2400$; Proj./Corr./Jerk are mean
episode projection (\%), normalized correction, and $P_{95}$ jerk (m/s$^3$).
Matched hardware: $N=180$; time is episode wall time divided by executed steps,
averaged over episodes (Supplement Sec.~3.2).}
\label{tab:main-results}
\centering
\setlength{\tabcolsep}{1.7pt}
\renewcommand{\arraystretch}{0.88}
\begin{tabular}{lrrrr}
\toprule
Method & Safe/$N$ & Proj. & Corr. & Jerk \\
\midrule
Nominal CBF-QP & 1785/2400 & 8.11 & 0.0177 & 2.64 \\
Static RACF & 2133/2400 & 6.51 & 0.0172 & 3.13 \\
Adaptive RACF & 2262/2400 & 6.63 & 0.0183 & 3.24 \\
Envelope RACF & 2287/2400 & 6.77 & 0.0188 & 3.36 \\
Robust CBF-QP & 2375/2400 & 8.02 & 0.0179 & 3.03 \\
\addlinespace
\multicolumn{5}{l}{\emph{Matched-hardware cohort ($N=180$)}} \\
Method & Safe/$N$ & \multicolumn{3}{c}{Runtime (ms/step)} \\
Adaptive RACF & 161/180 & \multicolumn{3}{c}{2.434} \\
Envelope RACF & 164/180 & \multicolumn{3}{c}{4.082} \\
Robust CBF-QP & 170/180 & \multicolumn{3}{c}{3.090} \\
\bottomrule
\end{tabular}
\end{table}

\subsection{Controlled component studies}
With candidates, objective, solver, bounds, and fallback fixed, residual-margin
injection raises safety from 1993/2400 to 2102/2400: +4.54 pp [3.63, 5.54].
In a separate 180-unit study, candidate-only and Static reach 144/180, whereas
margin-only and joint reach 161/180 (Fig.~\ref{fig:evidence}(b)); candidate
selection has no independent binary safety gain in this cohort.

Another development/test split gives 162/180, 156/180, 157/180, and 158/180
for selected fixed, rolling, ACI, and Adaptive; Adaptive minus fixed is
$-2.22$ pp [$-5.56$, 0.56], showing neither superiority nor equivalence. An
offline floor raises rolling/ACI coverage from 83.6/84.3\% to 90.4/90.8\%
without changing paired safety (Supplement Sec.~3.2).

\subsection{Computational cost and diagnostics}
The balanced-order evaluation in Table~\ref{tab:main-results} measures mean
amortized rollout time. Adaptive's mean is 21.2\% lower than Robust's, with
5.00 pp lower safety. This comparison of complete implementations does not
isolate candidate selection.

Supplement Secs.~2.3, 2.5, 2.7, and 2.10 provide matched-point, coverage,
out-of-grid, and typed-fallback diagnostics.

\section{Discussion and Conclusion}
\label{sec:conclusion}
RACF supplies residual-driven action filtering for frozen ACC policies. It has
higher episode safety than the evaluated nominal CBF-QP baseline, and controlled
tests identify a residual-margin benefit. Adaptive has lower mean amortized
rollout time, while Robust CBF-QP has higher episode safety in that cohort.
Selected fixed margins remain competitive, so the
evidence does not establish a universal advantage for online adaptation.

The conditional analysis links one-step coverage to action execution under
explicit premises. Evidence remains limited to simulated ACC, an eight-model
library, and simulator-state access; larger libraries, estimated states, and
target-hardware evaluation remain future work.

\newpage
\bibliographystyle{IEEEbib}
\bibliography{references}

\end{document}

% --- supplement: supplement.tex ---

\ninept
\maketitle
\section{Supplementary\\Experiments}
\suppressfloats[t]
\subsection{Evaluation protocol}
\label{sec:supp-protocol}
All studies below are ACC \emph{partial rollouts}. Compared methods within a
cohort share initial physical states, environment seeds, policy checkpoints,
inference settings, and exogenous lead trajectories. Policies receive their
training-time observations, whereas filters receive simulator state. Outcomes
include collision, violation, fallback, early termination, solver status,
tracking, jerk, intervention, and latency. Exact McNemar tests are supplementary:
they do not account for cross-policy dependence from shared resets. Binary and
continuous effect intervals use
10,000 condition--scenario-stratified shared-reset-cluster bootstrap draws that
carry all fixed policies jointly, with Holm correction within each comparison family.

The main comparison contains 12 frozen controllers, four physical conditions, two
lead scenarios, 25 resets per cell, and $H=500$. The residual-margin study
uses 2,400 fresh paired units; the adaptive study uses 720 units and 80
environment seeds disjoint from calibration and development; the horizon cohort
uses 120 units with shared trajectory prefixes. Separate development ablations
use nine policies, two conditions, two scenarios, ten resets, and $H=500$.
The action range is $[-1,1]$, and all filters use the same bounded-braking
fallback. Static/Adaptive RACF use $q_\alpha=0.0918766$, Envelope RACF uses
fixed $B=0.45$, and CBF-QP uses $\kappa=0.2$.
Figure~\ref{fig:supp-diagnostics} presents policy-group, physical-condition,
one-step-coverage, and action-noise slices.
The controller artifact types and seed provenance are listed in the companion
reproducibility record (with SHA-256 asset manifest); all analysis below uses
the same frozen assets.

\begin{figure*}[t]
\centering
\includegraphics[width=\textwidth]{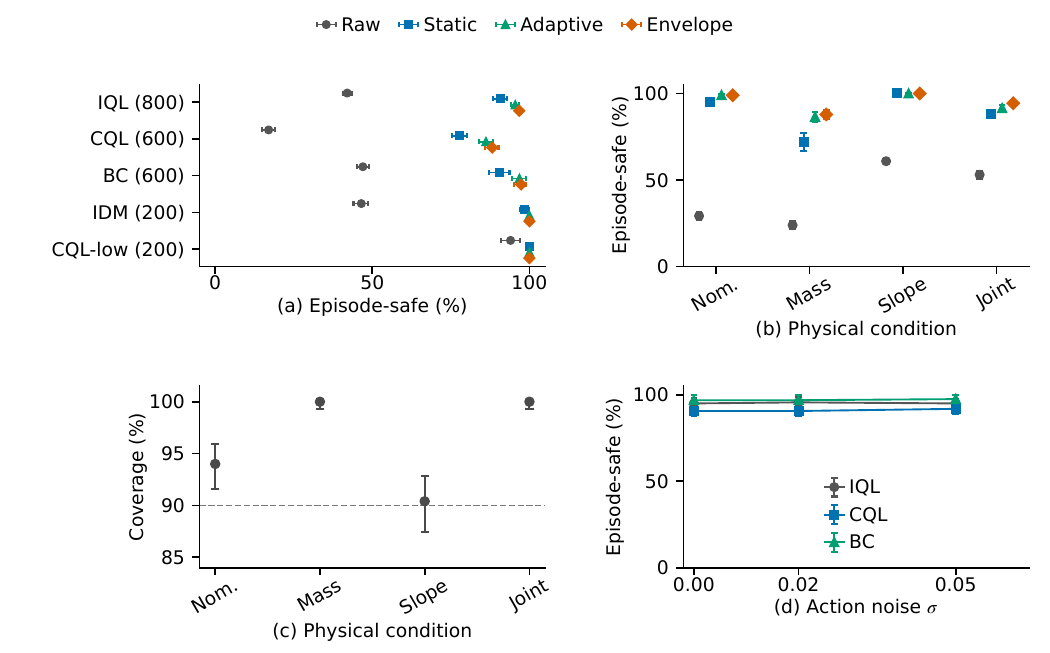}
\caption{Additional ACC results. Panels (a--b) show episode-safe rates
with 95\% shared-reset cluster-bootstrap intervals for policy groups and
physical conditions in the main
12-controller primary matrix. IQL/BC use seeds 0--3/0--2 ($N=800/600$);
CQL uses mixed-data seeds 0--2 ($N=600$), CQL-low the suboptimal-data seed-0
checkpoint ($N=200$), and IDM has $N=200$. CQL plus CQL-low forms the CQL
family in Table~\ref{tab:policy-family}.
(c) Independent one-step nominal-residual coverage on 500 disjoint transitions
per condition, with exact 95\% binomial intervals, target $1-\alpha=0.9$, and
an 84--101.5\% detail axis.
(d) Adaptive-RACF episode-safe rate with 95\% shared-reset cluster-bootstrap
intervals for IQL, CQL, and BC under action noise in the six-policy cohort.
Color denotes methods in (a--b), while the inset legend denotes policy groups
in (d).}
\label{fig:supp-diagnostics}
\end{figure*}

\subsection{Controlled RACF decomposition}
The controlled comparison keeps the eight-candidate projection, dynamics,
solver, bounds, and fallback fixed. The zero-margin projection obtains
1,993/2,400 episode-safe rollouts; adding $m=q_\alpha$ produces 2,102/2,400,
a paired gain of 4.54 points (95\% CI [3.63,5.54]). There are 110 improving
pairs and one reversing pair, exact McNemar $p=8.63\times10^{-32}$, and zero
collisions for both methods.

\begin{table}[!h]
\caption{Controlled residual-margin comparison on 2,400 controller--trials.
Safe is the exact safe-episode count; Proj. and Viol. are observed-step
fractions; Jerk is the mean episode $P_{95}$ in m/s$^3$.}
\label{tab:pre-racf}
\centering
\begin{tabular}{lrrrr}
\toprule
Method & Safe & Proj. & Viol. & Jerk \\
\midrule
Base robust projection & 1993 & .0669 & .00323 & 2.858 \\
$+\,$Static RACF & 2102 & .0635 & .00260 & 3.098 \\
\bottomrule
\end{tabular}
\end{table}

The margin lowers projection frequency by 0.338 percentage points and violation
burden by 0.063 points. Jerk P95 rises by 0.240~m/s$^3$, speed RMSE by
0.002~m/s, and gap RMSE by 0.050~m, exposing the associated control cost.

The independent matched cohort evaluates the complete adaptive
candidate-and-margin update. Mean correction differs by 0.71\% and projection
frequency by 0.254 percentage points from static margin 0.25, satisfying the matching
criterion. Adaptive obtains 622/720 episode-safe rollouts versus 595/720, a
3.75-point gain (95\% CI [2.36,5.28]), with 30 improving and three reversing
pairs, $p=1.40\times10^{-6}$, zero collisions, and speed/gap RMSE ratios of
1.0005/1.0044. Eight policy strata meet the projection-frequency tolerance;
BC seed~1 lies outside it, so the comparison is interpreted at the aggregate level.

\subsection{Absolute filters and candidate set}
\begin{table}[!h]
\caption{Main comparison on 2,400 controller--trial units. The lower block
selects Envelope--Robust costs; Table~\ref{tab:full-registered} includes Adaptive.
Corr. is normalized action-correction
magnitude, Fall. is an observed-step fraction, Jerk is episode $P_{95}$ in
m/s$^3$, Speed is speed RMSE in m/s, and Gap is gap RMSE in m.}
\label{tab:external-baselines}
\centering
\begin{tabular}{lrr}
\toprule
Method & Episode-safe & Collisions \\
\midrule
Raw & 1001 & 904 \\
Nominal QP & 1437 & 0 \\
Nominal CBF-QP & 1785 & 0 \\
Static RACF & 2133 & 0 \\
Adaptive RACF & 2262 & 0 \\
Envelope RACF & 2287 & 0 \\
Robust CBF-QP & 2375 & 0 \\
\bottomrule
\end{tabular}
\par\medskip
\setlength{\tabcolsep}{2.5pt}
\begin{tabular}{lrrrrr}
\toprule
\multicolumn{6}{c}{Descriptive control-cost means} \\
\midrule
Method & Corr. & Fall. & Jerk & Speed & Gap \\
\midrule
Envelope RACF & .0188 & .0097 & 3.356 & 6.706 & 29.079 \\
Robust CBF-QP & .0179 & .0073 & 3.030 & 6.696 & 28.894 \\
\bottomrule
\end{tabular}
\end{table}

Robust CBF-QP enforces all eight candidate barrier constraints and has the
highest episode-safe rate among evaluated methods in the registered cohort.
The controlled decomposition isolates adding the calibrated
residual margin to otherwise identical robust projection; it does not compare
conformal calibration with alternative margin-selection procedures. In the candidate deletion,
only removing the mass axis changes the endpoint; removing grade or gain leaves
it unchanged, while the no-mass and nominal sets share the lower endpoint.

\begin{table}[!h]
\caption{Candidate-set deletion at fixed $B=0.45$. Viol. steps is a count;
Proj. and Fallback are observed-step fractions; Safe/120 is the exact
safe-episode count.}
\label{tab:candidate-deletion}
\centering
\begin{tabular}{lrrrr}
\toprule
Set & Safe/120 & Viol. steps & Proj. & Fallback \\
\midrule
Full grid & 109 & 119 & .0701 & .0135 \\
No mass & 107 & 130 & .0776 & .0137 \\
No grade & 109 & 119 & .0701 & .0135 \\
No gain & 109 & 119 & .0713 & .0127 \\
Nominal & 107 & 130 & .0714 & .0133 \\
\bottomrule
\end{tabular}
\end{table}

\subsection{Safety-first control-cost selection}
The smoothing development study contains 4,320 method-episodes, 360 paired
policy--trial units, and 40 shared resets. Adding the residual margin raises
episode-safe count from 249/360 to 277/360 (+7.78 points
[5.00,10.56]); fixed $B=0.45$ reaches 320/360 (+11.94 points
[8.61,15.28]). \mbox{A matched comparison} gives Adaptive 318/360 versus
static 302/360 (+4.44 points [2.50,6.67]), accompanied by a 9.11\% jerk
increase. The separate adaptive study above shows the same direction on
independent resets.

We then add $\lambda\|a-a_{t-1}\|_2^2$ while retaining the fixed envelope and
hard predicted-safety constraints. The selection criterion requires safety within one
point, zero collisions, at least 10\% jerk reduction, and speed/gap RMSE within
5\% of the envelope.
The smallest weight reduces jerk by 7.81\% while retaining safety within
1.11 points; larger weights yield 31.4--69.6\% jerk reductions with broader
safety and tracking tradeoffs. None of the three smoothing weights meets all
four requirements, so subsequent experiments use the original envelope and
preserve the predicted-safety priority.
\begin{table}[!h]
\caption{Smoothing screen relative to $\lambda=0$. Safe count is out of 360,
$\Delta$Safe is in percentage points, and Jerk/Speed/Gap are ratios. All
variants complete 360 rollouts with zero collisions.}
\label{tab:a100-slew-screening}
\centering
\begin{tabular}{lrrrrr}
\toprule
$\lambda$ & Safe count & $\Delta$Safe (pp) & Jerk & Speed & Gap \\
\midrule
0 & 320 & 0.00 & 1.000 & 1.000 & 1.000 \\
0.1 & 316 & $-1.11$ & .922 & 1.001 & 1.009 \\
1 & 293 & $-7.50$ & .686 & 1.011 & 1.084 \\
10 & 296 & $-6.67$ & .304 & 1.155 & 2.192 \\
\bottomrule
\end{tabular}
\end{table}

\subsection{Coverage, horizon, and deployment results}
Eight unique $(\alpha,n,H)$ configurations and the $B=0.45$ reference give
1,080 method-episodes. High marginal coverage and trajectory coverage are
reported separately:

Across 27 policy--configuration batches (1,080 episode rows and 120 paired
reset units), 184 configuration-level observations have at least 90\%
independent one-step coverage but an episode-level violation. Trace
analysis places 97.48\% of exceedances in lead-acceleration intervals.
The constant-lead predictor produces 0.401--0.413 episode maxima under
$-4$~m/s$^2$ braking versus $q=0.092$. Retrospectively adding lead modes
$\{-4,0,2\}$ reduces the maximum to 0.0035 and covers 120/120 inspected
episodes, motivating dedicated disturbance-aware recalibration on disjoint
complete trajectories.

From $H=100$ to 500, episode-safe rates change from 66.7 to 7.5\% for Raw,
90 to 65\% for Base projection, 97.5 to 80\% for Static, and 100 to 87.5\%
for Adaptive/Envelope; Robust CBF-QP remains at 98.3\%. The $H=250$ values
equal the $H=500$ values in this cohort. Under action noise
$\sigma=0,0.02,0.05$, Adaptive RACF retains positive gains over Raw across
IQL/CQL/BC, while disagreement rises to approximately 0, 0.016, and 0.040.
Static/Adaptive/Envelope warmed P95 latencies are 5.72/3.10/5.67~ms on RTX
4090.

A 20-step OSQP MPC solves all 100,000 programs but obtains 0/200
headway-safe trajectories, so it remains a model-mismatch comparison outside the
ranking.

\begin{table}[!h]
\caption{Sensitivity results. Setting encodes $\alpha/n/H$ for conformal
rows and $B,H$ for the reference. The quantity $q$ is a dimensionless residual
threshold; 1-step, MaxCov, and Safe are fractions, and Viol. counts violation
steps.}
\label{tab:sensitivity}
\centering
\setlength{\tabcolsep}{2.5pt}
\begin{tabular}{lrrrrr}
\toprule
Setting & $q$ & 1-step & MaxCov & Safe & Viol. \\
\midrule
.05/500/500 & .0962 & .989 & .000 & .800 & 211 \\
.10/500/500 & .0920 & .970 & .000 & .792 & 219 \\
.20/500/500 & .0833 & .939 & .000 & .783 & 225 \\
.10/100/500 & .0882 & .959 & .000 & .800 & 214 \\
.10/250/500 & .0898 & .963 & .000 & .800 & 222 \\
.10/1000/500 & .0908 & .964 & .000 & .800 & 221 \\
.10/500/100 & .0920 & .970 & .475 & .958 & 18 \\
.10/500/250 & .0920 & .970 & .000 & .792 & 219 \\
$B=.45,H=500$ & .4500 & 1.000 & 1.000 & .925 & 81 \\
\bottomrule
\end{tabular}
\end{table}

\par\medskip
\noindent\begin{minipage}{\columnwidth}
\refstepcounter{figure}\label{fig:controlled-effects}
\centering
\includegraphics[width=\columnwidth]{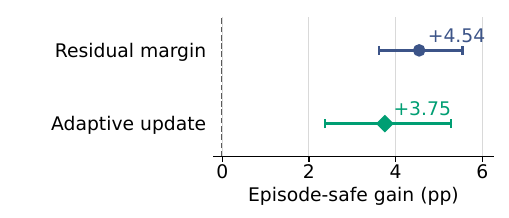}
\par\smallskip
\raggedright\textbf{Fig. \thefigure.} Controlled episode-safe effects in two distinct ACC
\emph{partial-rollout} studies: residual-margin injection ($N=2400$) and joint
candidate--margin updating versus matched Static ($N=720$). Whiskers are 95\%
shared-reset cluster-bootstrap intervals; the latter is not margin-only.
\end{minipage}
\par\medskip

\refstepcounter{table}
\noindent\textbf{Table \thetable.} Robust CBF-QP minus Envelope RACF on 2,400
reset/trial-matched units from separate deterministic runs. Differences are
descriptive, not stepwise paired; positive safety values favor CBF-QP.
\par\smallskip
\begingroup\ninept
\centering
\setlength{\tabcolsep}{4pt}
\begin{tabular}{lr}
\toprule
Metric & Robust CBF--Envelope \\
\midrule
Episode-safe (pp) & 3.67 \\
Correction & -0.0009 \\
Projection (pp) & 1.25 \\
Fallback (pp) & -0.24 \\
Jerk P95 & -0.327 \\
Gap RMSE & -0.185 \\
Speed RMSE & -0.010 \\
Latency P95 (ms) & -0.029 \\
\bottomrule
\end{tabular}

\endgroup
\par\medskip

\refstepcounter{table}
\noindent\textbf{Table \thetable.} Episode-safe rates (\%) by physical condition
and lead scenario for Raw and Envelope; 300 controller--trial units per row.
$\Delta$ is Envelope minus Raw (pp), computed before rounding.
\par\smallskip
\begingroup\ninept
\centering
\setlength{\tabcolsep}{3pt}
\begin{tabular}{lrrr}
\toprule
Condition/scenario & Raw & Envelope & $\Delta$ \\
\midrule
Nominal/brake & 50.0 & 100.0 & +50.0 \\
Nominal/stop--go & 8.3 & 98.0 & +89.7 \\
Mass/brake & 42.3 & 93.3 & +51.0 \\
Mass/stop--go & 5.3 & 82.3 & +77.0 \\
Slope/brake & 69.7 & 100.0 & +30.3 \\
Slope/stop--go & 52.0 & 100.0 & +48.0 \\
Joint/brake & 69.7 & 97.0 & +27.3 \\
Joint/stop--go & 36.3 & 91.7 & +55.3 \\
\bottomrule
\end{tabular}

\endgroup
\par\medskip

\newpage
\refstepcounter{table}
\noindent\textbf{Table \thetable.} Initial-state stratification of the
Static--Base residual-margin effect. Cuts use pooled medians of 6.618~m
headway and $-0.0719$~m/s closing speed; rows overlap and are not additive; entries are descriptive paired safety
differences. Marginal and intersection rows overlap and are not additive.
\par\smallskip
\begingroup\ninept
\centering
\setlength{\tabcolsep}{3pt}
\begin{tabular}{lrr}
\toprule
Initial-state stratum & Pairs & Static--Base (pp) \\
\midrule
Small headway, high closing & 696 & 6.61 \\
High closing speed & 1200 & 6.67 \\
Small headway & 1200 & 4.92 \\
Large headway & 1200 & 4.17 \\
Low closing speed & 1200 & 2.42 \\
Large headway, low closing & 696 & 2.30 \\
\bottomrule
\end{tabular}

\endgroup
\par\medskip

\refstepcounter{table}
\noindent\textbf{Table \thetable.} Paired episode-outcome changes on shared
policy, condition, scenario, reset and seed: Raw-unsafe to filter-safe, and the
reverse. These count repaired episodes, not within-trajectory recovery events.
\par\smallskip
\begingroup\ninept
\centering
\setlength{\tabcolsep}{4pt}
\begin{tabular}{lrr}
\toprule
Filter & Repaired & Reverse \\
\midrule
Static RACF & 1132 & 0 \\
Adaptive RACF & 1261 & 0 \\
Envelope RACF & 1286 & 0 \\
\bottomrule
\end{tabular}

\endgroup
\par\medskip

\subsection{Implementation details}
Brake episodes start from $(v_e,v_l,g)=(15,15,32)$ in m/s, m/s, and m; the
lead vehicle applies $-4$~m/s$^2$ over 10--12~s and $+2$~m/s$^2$ over
22--25~s.

Stop--go episodes start from $(10,10,25)$ and apply the same
accelerations over 8--10.5~s and 18--23~s. Randomized resets add independent
uniform offsets in $[-1.5,1.5]$~m/s to both speeds and multiply gap by a
uniform factor in $[0.85,1.15]$. Collision terminates an episode at
$g\leq0.5$~m; otherwise evaluation ends at the selected horizon.

The policy observes normalized ego-speed error, closing speed, gap, grade, and
previous action. The filter uses simulator state; this evaluation excludes
sensor and state-estimation error. Adaptive RACF computes
candidate-specific $\alpha=0.1$ quantiles from a 40-transition window every ten
completed transitions after 20 observations. It retains at most four candidates
within 0.02 normalized-residual units of the smallest quantile, with deterministic
index tie-breaking. Outside a recovery period, an upward crossing of
$2q_\alpha$ by the maximum candidate residual restores the full candidate set
and suspends pruning for 20 completed transitions. Per-step grade filtering
continues; pruning resumes at the next eligible ten-transition update. A separate
100-transition nominal-residual window updates the operating margin while
preserving $q_\alpha$ as a floor.

RACF and CBF-QP use SLSQP with at most 40 iterations and
$\mathrm{ftol}=10^{-10}$ over $a\in[-1,1]$. A post-solve constraint tolerance
of $10^{-8}$ determines feasibility; infeasible or unsuccessful solves apply
$a=-1$. Per-trial outputs retain solver status, iterations, fallback, action
correction, constraint margin, latency, and the complete experiment configuration.

\clearpage
\subsection{Predictor-alignment study}
This alignment experiment uses different constraints from the equal-cardinality
comparison in Sec.~\ref{sec:reviewer-causal}; the two cohorts are analyzed separately.
These ACC \emph{partial rollouts} use three seed-0
policies (IQL, CQL, BC), two lead scenarios, three physical conditions
(nominal, mass shift, and actuator gain $0.7$), ten development and ten
evaluation resets per cell, and $H=500$. The experiment contains 2,220
method--episode rows and 1,080,508 observed steps; all rows and early
terminations are retained in the accompanying manifest.

The aligned variant explicitly keeps the nominal predictor $f_0$ in the
constraint set and uses $m=L_hq_\alpha=0.16563$. It reaches 149/180
(82.78\%) episode-safe evaluation episodes. Static RACF reaches 144/180
(80.00\%), while Robust CBF-QP reaches 170/180 (94.44\%) in this smaller
cohort. The paired aligned--static difference is +2.78 percentage points
(6 favorable and 1 adverse pair); this is an empirical operating-point result,
not a closed-loop guarantee.

\begin{figure*}[t]
\centering
\includegraphics[width=\textwidth]{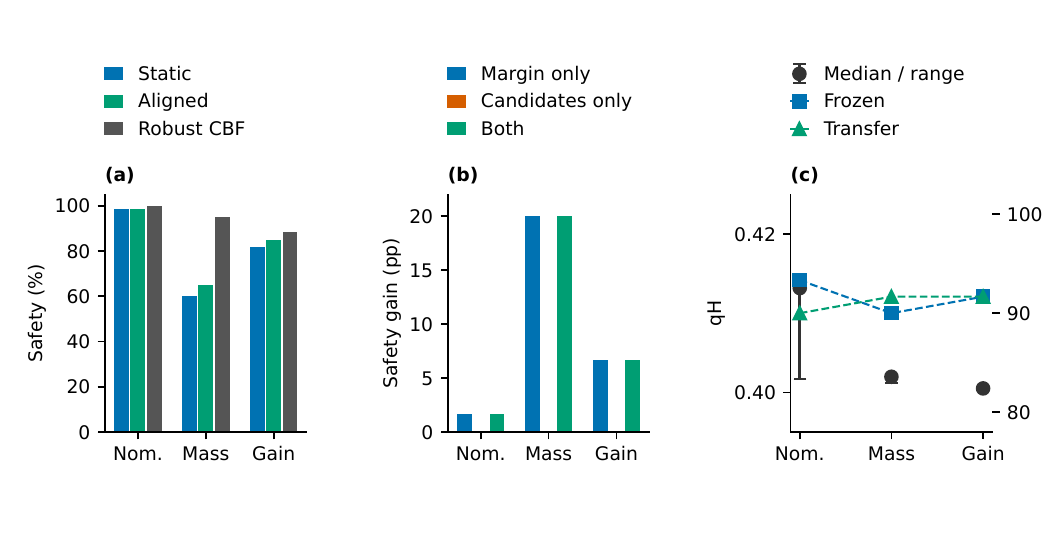}
\caption{Predictor-alignment and component comparisons in ACC \emph{partial rollouts}. (a) Safety
for Static, predictor-aligned RACF, and Robust CBF-QP across nominal, mass, and
gain conditions. (b) Candidate-only, margin-only, and joint updates relative to
Static. (c) Episode-max $q_H$ median and range (left axis) and stopped-episode
coverage in percent (right axis);
the transfer series changes the controller and is descriptive.}
\label{fig:review-alignment}
\end{figure*}

The candidate-only ablation has the same 144/180 safety count as Static,
whereas the margin-only and joint updates each reach 161/180 (89.44\%).
Their paired differences against Static are both +9.44 percentage points;
candidate-only has no discordant pair. The window-20 sensitivity reaches
108/120 (90.00\%) and top-2 selection 109/120 (90.83\%) on the nominal/mass
subset. These values support a component-level sensitivity statement within
this cohort, not a general adaptive-validity claim.

\begin{table}[!t]
\caption{Predictor-alignment methods on 180 evaluation episodes. Safe is the
headway-plus-collision endpoint; FB is fallback steps divided by observed
steps. $J_{95}$ is the mean per-episode jerk P95.}
\label{tab:review-alignment}
\centering
\setlength{\tabcolsep}{1.4pt}
\ninept
% Generated from validated Job 81161; partial rollout.
\begin{tabular}{lrrrrr}
\toprule
Method & Safe/$N$ & Coll. & $|\Delta a|$ & FB (\%) & $J_{95}$ \\
\midrule
Raw & 28/180 & 82 & 0.0000 & 0.00 & 0.76 \\
Base projection & 124/180 & 0 & 0.0152 & 1.32 & 2.13 \\
Static RACF & 144/180 & 0 & 0.0151 & 1.32 & 2.20 \\
Aligned RACF & 149/180 & 0 & 0.0155 & 1.36 & 2.22 \\
Robust CBF & 170/180 & 0 & 0.0146 & 0.89 & 2.03 \\
Envelope RACF & 164/180 & 0 & 0.0158 & 1.36 & 2.28 \\
Episode transfer & 170/180 & 0 & 0.0165 & 1.41 & 2.23 \\
Margin only & 161/180 & 0 & 0.0158 & 1.37 & 2.22 \\
Candidates only & 144/180 & 0 & 0.0153 & 1.34 & 2.26 \\
Both components & 161/180 & 0 & 0.0158 & 1.38 & 2.26 \\
\bottomrule
\end{tabular}

\end{table}

\begin{table}[!t]
\caption{Episode-max calibration range by physical condition. Values pool the
six policy--scenario strata per condition; coverage counts are descriptive.}
\label{tab:review-qh}
\centering
\setlength{\tabcolsep}{3pt}
\ninept
\begin{tabular}{lccc}
\toprule
Condition & $q_H$ range & Frozen & Transfer \\
\midrule
Nominal & 0.4016--0.4133 & 56/60 & 54/60 \\
Mass +20\% & 0.4012--0.4020 & 54/60 & 55/60 \\
Gain $0.7$ & 0.4005--0.4005 & 55/60 & 55/60 \\
\bottomrule
\end{tabular}
\end{table}

The diagnostic calibrates stopped-episode maxima within 18 pre-frozen
policy--condition--scenario strata, including collision-terminated failures.
These follow the declared stopping rule and are not complete-$H=500$ maxima
for terminated episodes. The resulting $q_H$ values range from 0.4005 to 0.4133.
These values exceed the one-step
$q_\alpha$ and hence do not meet $q_H\leq q_\alpha$, the condition obtained
from the margin-fit requirement $m_t\geq L_hq_H$ when $m_t=L_hq_\alpha$.
The displayed rates therefore serve as operating-point evidence rather than a
pathwise guarantee.
The frozen aligned controller's episode-max coverage is reported alongside the
transfer controller in Fig.~\ref{fig:review-alignment}; changing the margin
changes the trajectory law, so the latter remains an empirical transfer test.

\section{Additional experiments and analyses}
\label{sec:evidence-addendum}

This section examines component effects, coverage, dynamics shifts and fallback
behavior. Simulator results are ACC \emph{partial rollouts}; analyses of
existing data and mock measurements are identified explicitly. Experiment
identifiers link the reported counts to the accompanying records.

\subsection{Unified endpoint and fallback definitions}
The primary episode-safe endpoint used by Table~2 and reset-matched recovery
is satisfied when every executed transition has nonnegative headway (gap)
margin and the episode has no collision. A collision is recorded separately
even when an episode terminates early. The dual endpoint, which additionally
requires the speed margin, is retained as a labelled diagnostic in fixed-scope
and replay tables; it is not silently substituted for the Table~2 endpoint.
Projection is the fraction of executed steps with the solver's
\texttt{projected} flag (or the equivalent applied--policy action difference
in the A100 baseline trace); action correction is the mean
$|a_t-a_t^\pi|$; jerk is the per-episode 95th percentile of absolute jerk;
fallback is the fraction of executed steps for which the solver/post-check
path invokes the emergency action. These definitions are applied to the
2$\times$2 ablation, matched comparisons, ACI, and CPSF-style experiments. The initial transition records
did not type fallback causes. Job 82163 therefore reruns the frozen evaluation
units with post-solve telemetry only and reports the scoped exact taxonomy
below; the original actions and trajectories remain bit-identical.

\subsection{Candidate--margin 2$\times$2 ablation}
The four methods reuse 180 policy--condition--scenario--trial units from Job
81161, with identical resets, seeds, solver, bounds, and fallback. The binary
endpoint and continuous paired summaries are computed from per-trial records.

\begin{table*}[t]
\centering\ninept
\caption{2$\times$2 ablation on 180 shared evaluation units. Continuous
values are treatment minus reference; intervals are shared-cluster bootstrap
95\% intervals.}
\label{tab:addendum-2x2}
\begin{tabular}{lrrrrr}
\toprule
Contrast & Safe$_\mathrm{ref}$ & Safe$_\mathrm{trt}$ & $\Delta$ safe (pp) & Proj. $\Delta$ & FB $\Delta$ \\
\midrule
Candidate only $-$ static & 144 & 144 & 0.00 [0,0] & $-0.00049$ & $+0.00017$ \\
Margin only $-$ static & 144 & 161 & +9.44 [4.44,15.56] & $-0.00107$ & $+0.00048$ \\
Both $-$ margin only & 161 & 161 & 0.00 [0,0] & $+0.00026$ & $+0.00007$ \\
Both $-$ candidate only & 144 & 161 & +9.44 [4.44,15.56] & $-0.00032$ & $+0.00038$ \\
\bottomrule
\end{tabular}
\end{table*}

The corresponding mean action-correction differences are $+0.00019$,
$+0.00075$, $-0.00008$, and $+0.00047$; mean jerk-P95 differences are
$+0.0616$, $+0.0206$, $+0.0415$, and $+0.0004$ in the same order. Thus the
observed binary gain in this cohort is attributable to margin adaptation; the
candidate update has no separate observed binary gain.

For the predictor-alignment comparison at $q_\alpha$, the shared-reset
cluster-bootstrap interval is [18.89, 28.89] pp. Resampling individual rows
instead gives [17.78, 30.56] pp; we use the cluster interval to preserve
dependence among policies evaluated on the same resets.

\subsection{Intervention-matched Pareto comparison}
Job 82029 evaluated 1,920 rows (three policies, two physical conditions, two
lead scenarios, eight operating points, ten development and ten independent
test trials). Static margins were selected on development projection rates.
The predefined matching criteria (projection-rate error at most 0.005 and relative action
correction error at most 0.10) retained 40 of 48 adaptive--static choices,
yielding 400 paired test units; eight unmatched choices remain in the records
but are excluded from this comparison. Figure~\ref{fig:addendum-pareto}
shows three safety--utility views: safety--intervention, safety--jerk, and
safety--correction views. This is a partial rollout and does not establish a
global Pareto frontier.

\begin{figure*}[t]
\centering
\includegraphics[width=\textwidth]{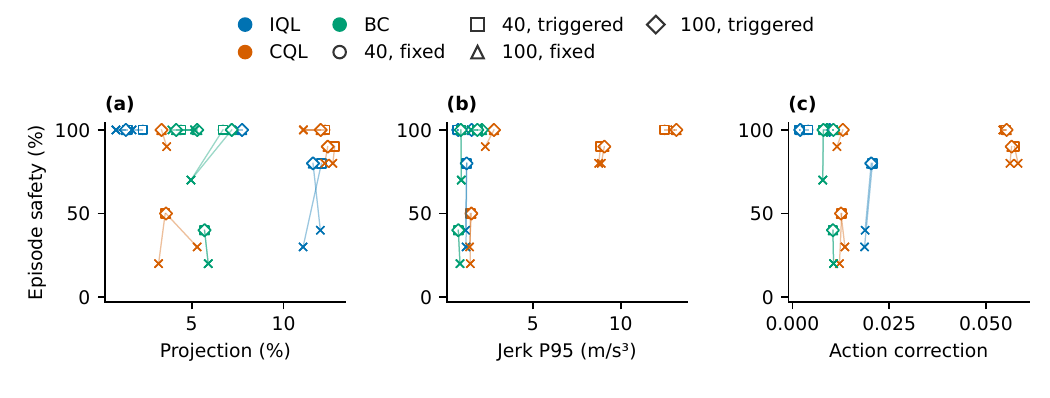}
\caption{Development-matched Pareto comparisons on independent test seeds.
(a) Safety versus projection frequency. (b) Safety versus episode $P_{95}$ jerk.
(c) Safety versus mean action correction. Crosses are Static operating points,
open markers are Adaptive points, and lines connect each eligible pair. Colors
identify policy families. Marker labels give the residual-window length (40 or
100) and fixed-threshold or residual-triggered updating.}
\label{fig:addendum-pareto}
\end{figure*}

\subsection{Predictor alignment and fallback analysis}
The predictor-aligned RACF analysis uses 90,000 recorded evaluation steps. The nominal
$f_0$ is explicitly present at all steps, $m=L_hq_\alpha=0.1656329484$ at all
steps, and no strict implication failure is observed. Safety is 149/180;
fallback occurs on 1,224/90,000 steps (1.360\%). The episode-calibrated
transfer variant also includes $f_0$ at all steps, but changes the closed-loop
policy through $q_H$-dependent margins; it is an empirical transfer diagnostic
(170/180 safe, 1,265/90,000 fallback steps), not a guarantee. Emergency
fallback can leave a negative robust constraint. The original trace does not
itself support a typed decomposition; the separately instrumented Job 82163
below supplies that decomposition without changing the action path. These
limitations restrict interpretation of the observed safety results.

\subsection{Rolling coverage and adaptive conformal baseline}
The isolated ACI baseline uses the first 500 nominal calibration scores,
$\alpha_0=0.10$, $\gamma=0.01$, and the same state access, candidate grid,
solver, and fallback. On 180 paired evaluation episodes it is 143/180 safe,
versus 144/180 for static RACF (difference $-0.56$ pp, cluster-bootstrap CI
$[-1.67,0.00]$) and 161/180 for adaptive RACF (difference $-10.00$ pp, CI
$[-14.44,-6.11]$). Descriptive rolling coverage is 0.8997 before step 100
and 0.8675 after step 100 with a 50-step window; the episode miscoverage
mean is 0.1261. Figure~\ref{fig:addendum-coverage} reports the stepwise
trajectory. These values do not establish adaptive conformal validity or a
closed-loop safety guarantee.

\begin{figure}[!h]
\centering
\includegraphics[width=\columnwidth]{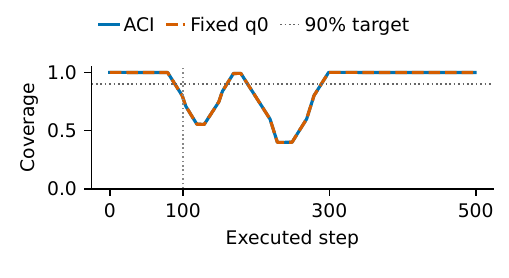}
\caption{Descriptive rolling coverage from the ACI partial rollout. The
blue ACI and orange fixed-$q_0$ curves use a 50-step rolling window and coincide
where the coverage values agree. The vertical line marks step 100 and the
dashed horizontal line the 0.90 target.}
\label{fig:addendum-coverage}
\end{figure}

\subsection{External CPSF-style comparator}
The external baseline is a one-step finite predictive-set comparator:
$\mathcal C_t=\{\theta_i:\|f_{\theta_i}(s_t,a_t^\pi)-f_0(s_t,a_t^\pi)\|\le q_0\}$,
followed by the unchanged RACF projection and fallback. It is not a
multi-step backup/viability predictive safety filter reproduction. Job 82045
completed 180 paired episodes and 90,000 steps. It obtains 144/180 safety,
equal to static RACF (paired difference 0.00 pp, CI [0,0]) and below adaptive
RACF by 9.44 pp (CI $[-13.89,-5.56]$). Fallback is 1,192/90,000 steps
(1.324\%); no predictive set was empty in this cohort. The result is an
empirical external-baseline comparison only.

\subsection{Dynamic recovery and out-of-grid evidence}
The dynamic trace reconstruction uses 180 existing adaptive traces and 360
observable strong-residual proxy events. Margin response occurs for 360/360
events, candidate broadening for 249/360, first action intervention for
155/360, and ten-step residual recovery for 360/360. These are observable
proxy alignments, not causal selector-trigger labels. Figure~\ref{fig:addendum-dynamic}
shows one selected time window.

\begin{figure}[!h]
\centering
\includegraphics[width=\columnwidth]{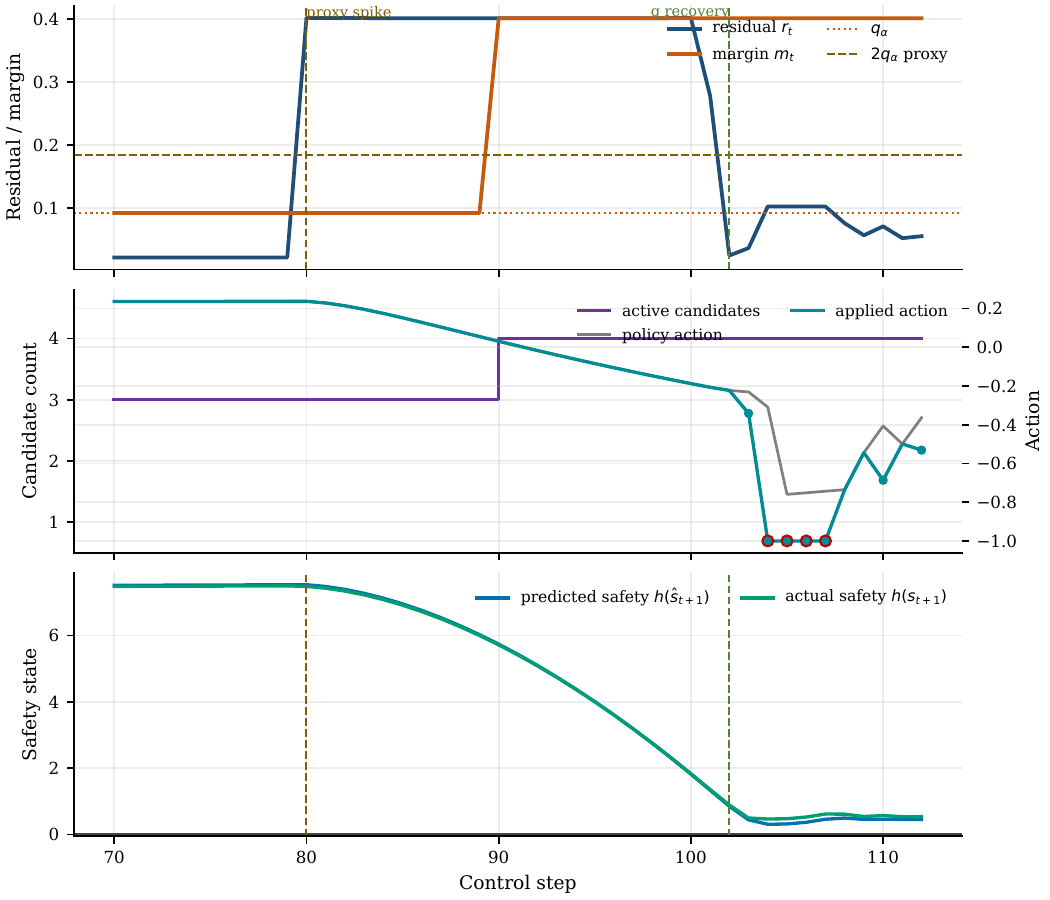}
\caption{Dynamic-response trace reconstructed from existing
serialized observables. (a) Residual $r_t$, margin $m_t$, $q_\alpha$, and the
$2q_\alpha$ proxy threshold. (b) Active-candidate count and policy/applied
actions; red-outlined points mark emergency braking. (c) Predicted and actual
safety margins. Vertical lines mark the proxy spike and residual recovery.
The sequence is descriptive and does not imply that fallback preserves safety.}
\label{fig:addendum-dynamic}
\end{figure}

The completed mass development cohort (Job 81979) contains 1,000 rows over
1,200, 1,500, 1,800, and 2,100 kg; 1,200 and 2,100 kg are extrapolation
points relative to the registered mass range. It has 46 collision rows and is
reported only as development partial-rollout evidence. The four-axis
extension (Job 82049) covers mass, grade, actuator gain, and lead braking in
3,750 method--episode observations under its separate frozen protocol. Collision-free and
controllable-safe endpoints are reported separately; this remains development
partial-rollout evidence rather than a generalization or full-benchmark claim.

\begin{figure}[!h]
\centering
\includegraphics[width=\columnwidth]{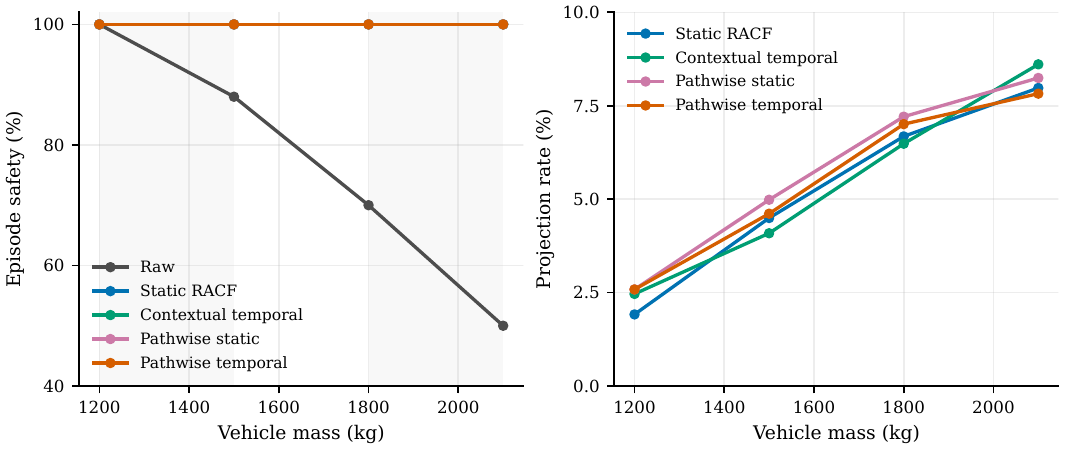}
\caption{Mass-axis development diagnostic. (a) Episode-safe rate and
(b) projection rate for Raw and four RACF operating points. Masses 1500 and
1800 kg are registered; 1200 and 2100 kg are extrapolation points. This
development partial rollout is not a generalization guarantee.}
\label{fig:addendum-mass}
\end{figure}

\begin{figure}[!h]
\centering
\includegraphics[width=\columnwidth]{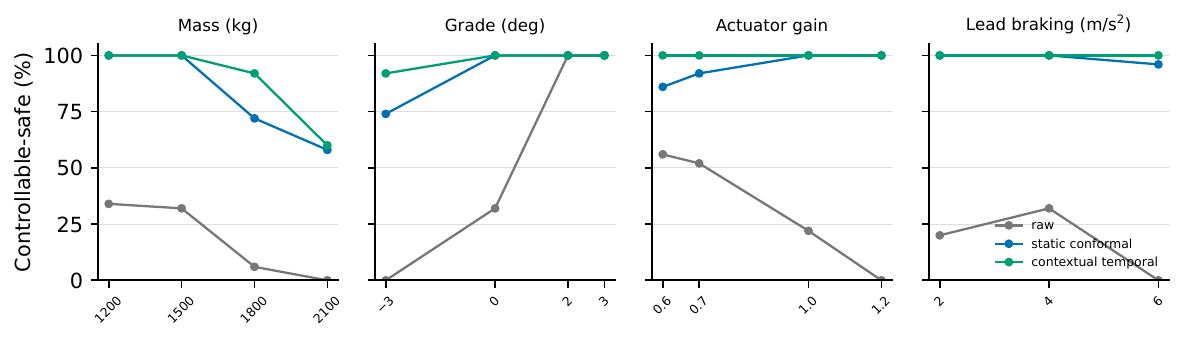}
\caption{Four-axis out-of-grid development summary from Job 82049. Panels show
the controllable-safe rate for Raw, Static conformal, and Contextual temporal
across mass, grade, actuator gain, and lead braking. Collision-free status,
projection, fallback, correction, jerk, and paired cluster-bootstrap contrasts
are reported in the numerical results rather than plotted here.}
\label{fig:addendum-four-axis}
\end{figure}

\subsection{Runtime and utility boundary}
The real CPU episode probe covers 54 ACC episodes (three policies, three
conditions, two scenarios, three trials) and 27,000 executed steps. Wall-clock
episode latency has mean/P50/P95/P99/max 1,030.8/1,039.5/1,424.4/1,590.4/
1,619.6 ms. The warm per-step policy--filter end-to-end stage has an episode
mean of 1.992 ms and an episode-mean P95 of 2.781 ms; its mean filter-only and
update stages are 1.591 and 0.234 ms. This is a real CPU partial-rollout
measurement, but it remains host-specific and is not a target-hardware or
full-benchmark latency claim. The frozen-state CPU mock probe is retained in
the evidence package for comparison. Existing utility tables report safety,
collision, violation burden, tracking error, jerk, action correction, and
projection frequency; fallback safety is kept separate because the analysis shows
that emergency fallback can violate the robust constraint.

\subsection{One-factor sensitivity sweep}
Job 82109 evaluates 1,040 episodes from one frozen IQL checkpoint: four
physical conditions, two lead scenarios, five development and five independent
test trials, and 13 configurations. The default is frozen at window 40,
update interval 10, $\eta=1$, top-$k=4$, pruning slack 0.02, grade threshold
$0.75^\circ$, restore threshold $2q$, and recovery length 20. Alternatives
vary one factor at a time: window 100; update intervals 5 and 20; $\eta=0.5$
and 1.5; top-$k=2$; pruning slack 0.01 and 0.05; grade thresholds 0 and
$1.5^\circ$; and restore thresholds $q$ and $3q$. Development ranking uses
safety, then violation rate, action correction, and jerk; independent test rows
are never used to select a configuration. The test endpoint is unchanged across
these settings in this cohort, while projection, fallback, correction, and jerk
differences are retained in the machine-readable analysis.

\begin{figure*}[t]
\centering
\includegraphics[width=\textwidth]{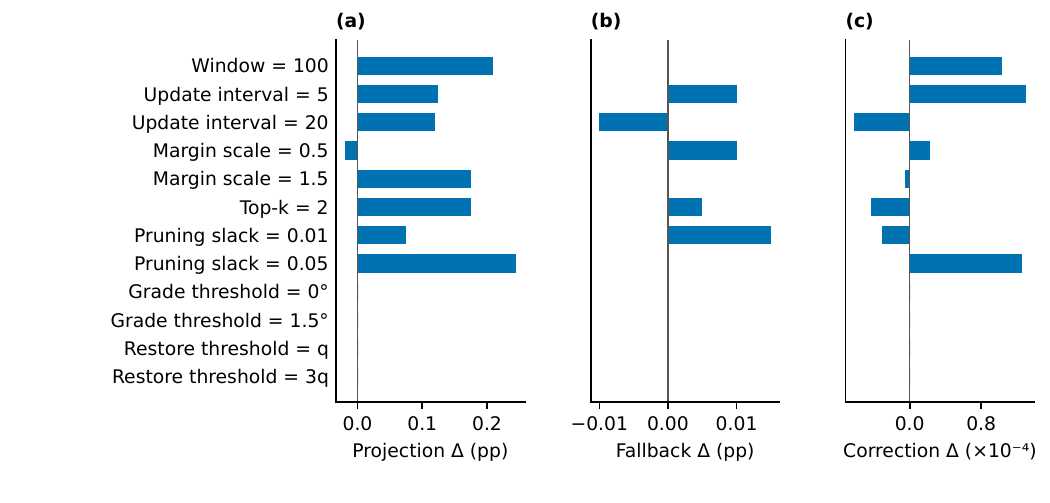}
\caption{Independent-test deltas relative to the frozen sensitivity baseline.
Rows vary one hyperparameter at a time; bars show setting-minus-baseline
changes in projection and fallback (pp) and mean action correction
($10^{-4}$ normalized action units). Safety
endpoint deltas and cluster-bootstrap intervals are retained in the
accompanying JSON.}
\label{fig:addendum-sensitivity}
\end{figure*}

\subsection{Fallback causes and outcomes}
Job 82163 is a 540-episode, 270,000-step ACC partial rollout comprising 180
development traces and the same 180 evaluation units for predictor-aligned and
adaptive RACF. It freezes policies, checkpoints, reset seeds, conditions,
scenarios, horizon, SLSQP options, action bounds, and emergency action. The
instrumentation runs after the unchanged solver/action path; all 540 traces are
state- and applied-action-identical to Job 81161 (maximum differences zero).

For the registered scalar ACC action $a\in[-1,1]$, every active one-step
constraint is non-increasing in $a$. Hence the continuous feasible intersection
is empty exactly when the complete constraint vector at $a=-1$ fails the
$-10^{-8}$ tolerance. Under this scoped predicate, predictor-aligned RACF has
1,224 fallback invocations in 90,000 evaluation steps: 812 feasible-set-empty,
412 solver-failure, and zero post-check-failure. Adaptive RACF has 1,241:
808, 433, and zero, respectively. The corresponding numbers of episodes with
fallback are 140/180 and 145/180. No fallback step is a collision; 439 aligned
and 276 adaptive feasible-set-empty fallback steps have a simultaneous gap
violation, while solver-failure steps have none. This exactness statement is
limited to the registered scalar one-step ACC model and tolerance; it is not a
generic RACF or closed-loop safety guarantee. The earlier 401-action analysis
is retained as an analysis-only historical cross-check and gives identical
counts in this cohort.

\section{Additional Evidence and Reproducibility}
Main Fig.~2 summarizes five complementary comparisons. This supplement provides
the complete method table, horizon, condition/scenario and family slices,
protocols, traces, and continuous utility details.
Figure~\ref{fig:registered-slices} separates horizon, physical-condition, lead-
scenario, and policy-family evidence. Table~\ref{tab:interface-distinction}
states the intended scope of the three uncertainty-aware interfaces.

\subsection{Evaluation cohorts and data sources}
The residual-margin isolation cohort and the complete registered comparison are
distinct experiments. The former is Job~80915 (`racf\_pre\_racf\_baseline\_v1`)
with seed base 380000000, 12 policies, four conditions, two scenarios, 25
resets per cell, and 4,800 method rows; its Base/Static counts are 1993/2400
and 2102/2400. The latter reuses the fixed Job~79931 evaluation set and adds the
Job~82381 extension under `racf\_full\_registered\_acc\_matrix\_v1`; it has
12 policies, four conditions, two scenarios, 25 resets per cell, nine methods,
and 21,600 rows; its Static row is 2133/2400. These same-name rows do not share
a verified trial-key set and are not pooled. The package includes the two
protocols, manifests, source hashes, and document why the cohorts are not pooled.

\subsection{Matched-window mechanisms and computation}
\label{sec:reviewer-causal}
Job~82722 comprises three experiments in one A100 allocation. The margin study uses the
same three frozen policies, nominal/mass/gain conditions, brake/stop--go
scenarios, solver, eight-candidate set, state access, and fallback. Five
development resets per cell select a fixed margin from
\{0,0.05,$q_\alpha$,$L_hq_\alpha$,0.25,0.35,0.45,0.60\} and clipped-ACI
$\gamma\in\{0.005,0.01,0.02\}$ before 10 disjoint test resets per cell.
Fixed, rolling without/with an offline floor, ACI without/with that floor,
Adaptive, Envelope, and Robust reach 162, 156, 156, 157, 157, 158, 159, and
171 safe episodes out of 180; all have zero collisions. The floor changes no
binary outcomes. Adaptive minus fixed is $-2.22$ pp (95\% shared-cluster CI
$[-5.56,0.56]$), whereas Robust minus Adaptive is $+7.22$ pp
($[2.78,12.22]$). Thus neither a conformal-specific nor an update-rule safety
advantage is identified. Fixed, rolling and ACI controls use the full grid;
only the four rolling/ACI arms use the strict $r_t>q_t^{\rm raw}$ miss and
100-step window updated per step after warm-up;
standard Adaptive retains candidate selection and ten-step updates. The floor affects only control; ACI feedback uses
$q_t^{\rm raw}$ and $\alpha_t$ updates during warm-up. All 18,000 paired
warm-up steps are identical. Floor activation of 45.7\%/46.2\% raises coverage
from 83.6/84.3\% to 90.4/90.8\% but yields zero discordant safety pairs, an
observed null rather than population equivalence.

The runtime study uses one A100-PCIE-40GB GPU, one inference process and eight allocated
CPUs; OMP, MKL, OpenBLAS and NumExpr thread limits are one. The CPU model and
PyTorch inter-op thread count were not logged. Balanced method order shares the
same software and host allocation with two other experiments.
Wall time spans episode setup/reset, policy inference, filtering, updates,
simulator steps, in-memory diagnostics and outcome aggregation; checkpoint
loading and trace-file compression are excluded. Per-episode wall time divided
by executed steps is averaged across episodes. Adaptive, Envelope, and Robust average
2.434, 4.082, and 3.090 ms per executed step and reach 161/180, 164/180, and
170/180 safety. Adaptive retains the closest residual candidate on 98.11\% of
steps with mean residual regret $1.54\times10^{-5}$. The alignment study holds
cardinality at one: exact $f_0$ and a
parameter-matched callable are endpoint-identical on all 180 pairs (133/180).
A safety-direction residual gives $q_h=0.08193$; margins $q_\alpha$,
$L_hq_\alpha$, and $q_h$ yield 138/180, 143/180, and 139/180 safety. Hence the
directional variant does not replace the present method. The accompanying
records contain protocols, scripts, hardware specifications, input manifests,
paired intervals and SHA256 checksums. All simulator results are
\emph{partial rollouts}, not a closed-loop certificate or external benchmark.

\begin{figure*}[t]
\centering
\includegraphics[width=\textwidth]{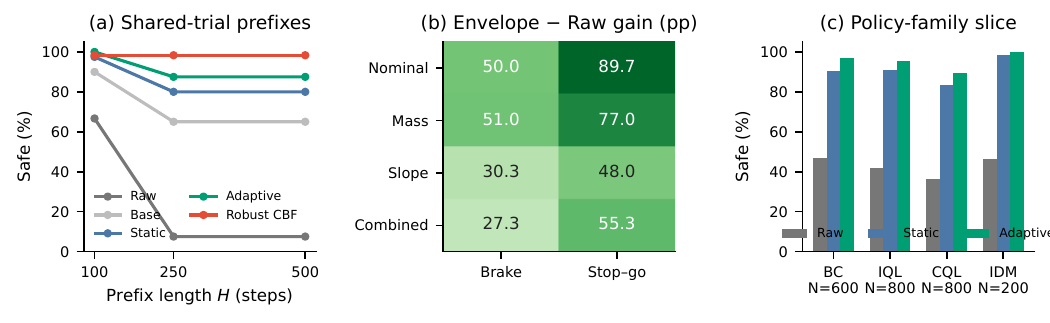}
\caption{Registered supporting slices. (a) Safety on shared trial prefixes;
all horizons use fixed denominators and collision-stopped trajectories remain
unsafe at longer prefixes. (b) Envelope--Raw safety gain over four physical
conditions and two lead scenarios. (c) Raw/Static/Adaptive policy-family safety;
denominators appear below each family and exact safe counts are in
Table~\ref{tab:policy-family}.}
\label{fig:registered-slices}
\end{figure*}

\begin{table}[t]
\caption{Operational distinction between uncertainty-aware safety interfaces.
Resid. identifies the calibrated object; Adapt. denotes online updating; Models
denotes explicit predictive-model constraints; Fallback records whether a
bounded emergency action is part of the interface.}
\label{tab:interface-distinction}
\centering
\small
\setlength{\tabcolsep}{3pt}
\begin{tabular}{lcccc}
\toprule
Method & Resid. & Adapt. & Models & Fallback \\
\midrule
Pred. filter & set & -- & model & -- \\
Adapt. conf. & quantile & yes & -- & -- \\
RACF (ours) & margin & yes & limited & bounded \\
\bottomrule
\end{tabular}
\end{table}

\begin{figure*}[t]
\centering
\includegraphics[width=\textwidth]{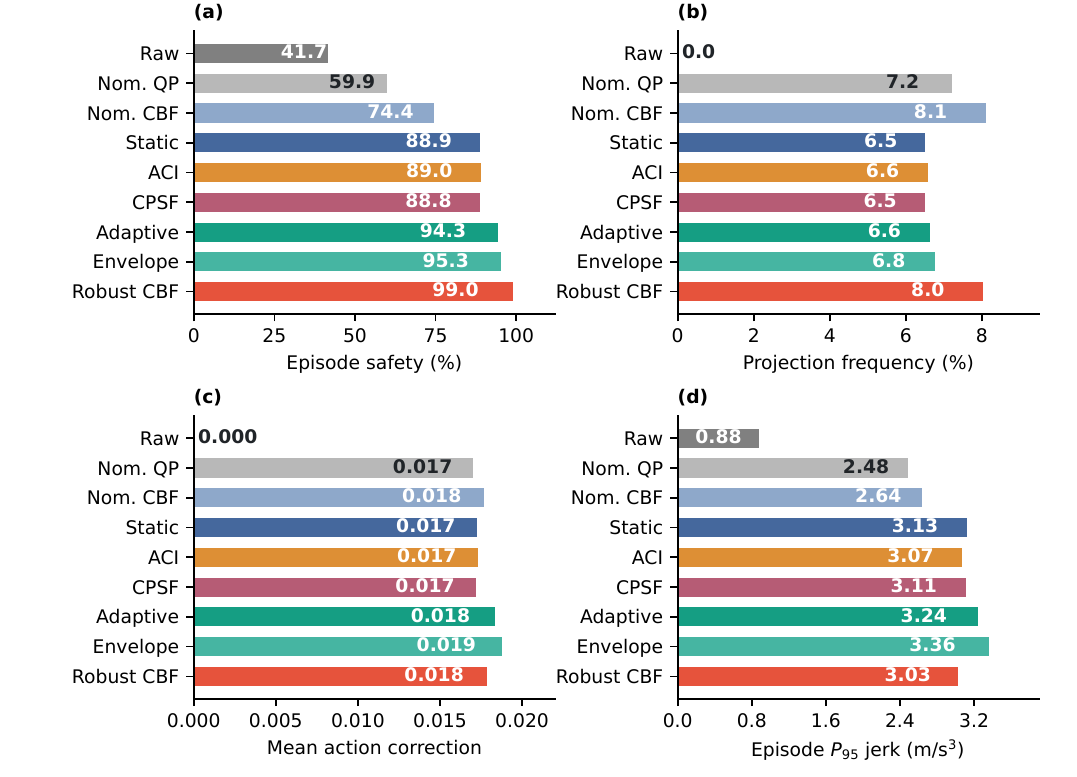}
\caption{Complete nine-method comparison on the shared 2,400-unit ACC
cohort. (a) Episode-safe rate. (b) Fraction of observed steps projected.
(c) Mean absolute normalized action correction. (d) Mean episode $P_{95}$ jerk
(m/s$^3$). ACI and CPSF-style share RACF's policies, resets, trajectories,
solver, and bounded fallback.}
\label{fig:full-registered-utility}
\end{figure*}

\begin{table*}[t]
\caption{Safety--utility results on the shared ACC cohort. Safe is the exact
safe-episode count and SR its percentage; Coll. counts collision-terminated
episodes. Projection and fallback are observed-step percentages, Correction is
the mean absolute normalized action change, and jerk is episode $P_{95}$ in
m/s$^3$. Utility uses observed prefixes, so Raw collisions confound cross-method
utility comparisons through survival/truncation. All filtered methods record
zero collisions. Half-up rounding gives Robust fallback 0.73\% from exactly
8,700/1,200,000 steps (fraction 0.00725).}
\label{tab:full-registered}
\centering
\small
\setlength{\tabcolsep}{4pt}
\begin{tabular}{lrrrrrrr}
\toprule
Method & Safe/2400 & SR (\%) & Coll. & Proj. (\%) & Fallback (\%) & Correction & $P_{95}$ jerk \\
\midrule
Raw & 1001 & 41.7 & 904 & 0.00 & 0.00 & 0.0000 & 0.88 \\
Nominal QP & 1437 & 59.9 & 0 & 7.21 & 0.75 & 0.0170 & 2.48 \\
Nominal CBF-QP & 1785 & 74.4 & 0 & 8.11 & 0.60 & 0.0177 & 2.64 \\
Static RACF & 2133 & 88.9 & 0 & 6.51 & 0.91 & 0.0172 & 3.13 \\
ACI & 2137 & 89.0 & 0 & 6.57 & 0.90 & 0.0173 & 3.07 \\
CPSF-style & 2130 & 88.8 & 0 & 6.49 & 0.91 & 0.0172 & 3.11 \\
Adaptive RACF & 2262 & 94.3 & 0 & 6.63 & 0.95 & 0.0183 & 3.24 \\
Envelope RACF & 2287 & 95.3 & 0 & 6.77 & 0.97 & 0.0188 & 3.36 \\
Robust CBF-QP & 2375 & 99.0 & 0 & 8.02 & 0.73 & 0.0179 & 3.03 \\
\bottomrule
\end{tabular}
\end{table*}

On the shared evaluation cohort, Adaptive RACF exceeds ACI by 5.21 pp (95\% paired
cluster-bootstrap interval [3.96, 6.54]) with 127 favorable and two adverse
pairs. It exceeds CPSF-style by 5.50 pp ([4.25, 6.83]) with 134 favorable and
two adverse pairs. The comparison isolates the residual-to-action interface:
the policies, reset states, exogenous lead trajectories, solver, and fallback are unchanged.
For Adaptive minus Envelope, 95\% intervals for safety, projection, correction,
and jerk are $[-1.58,-0.54]$ pp, $[-0.26,-0.02]$ pp,
$[-0.00053,-0.00035]$, and $[-0.182,-0.046]$. Against Robust, they are
$[-5.88,-3.58]$ pp, $[-1.59,-1.20]$ pp, $[0.00031,0.00063]$, and
$[0.126,0.297]$.

\begin{table}[t]
\caption{Policy-family decomposition of the registered comparison. Policies and
Trials give the family denominator; Raw, Static, and Adaptive report exact
safe-episode counts.}
\label{tab:policy-family}
\centering
\small
\setlength{\tabcolsep}{3pt}
\begin{tabular}{lrrrrr}
\toprule
Family & Policies & Trials & Raw & Static & Adaptive \\
\midrule
BC & 3 & 600 & 282 & 543 & 581 \\
IQL & 4 & 800 & 336 & 726 & 764 \\
CQL & 4 & 800 & 290 & 667 & 717 \\
IDM & 1 & 200 & 93 & 197 & 200 \\
\bottomrule
\end{tabular}
\end{table}

For Fig.~2(d), each of the 48 development choices is one of four adaptive
configurations in a policy--condition--scenario stratum (3 policies, 2
conditions, 2 scenarios). Eligibility requires absolute projection-rate error
$\leq0.5$ pp and relative action-correction error $\leq10\%$ against the
nearest Static point. Forty choices pass; the eight failures are omitted from
the matched-pair plot. Each retained choice is evaluated on 10 disjoint test
resets, giving 400 paired test units.

\begin{table}[t]
\caption{Constraint comparisons on 180 shared units. Safe/total gives exact
paired counts; $\Delta$ pp is the first configuration minus the second. Adding $f_0$
changes both exact-$f_0$ inclusion and constraint cardinality.}
\label{tab:fixed-scope}
\centering
\small
\setlength{\tabcolsep}{3pt}
\begin{tabular}{lcr}
\toprule
Comparison & Safe/total & $\Delta$ pp \\
\midrule
Global fixed / pooled & 170/180 / 170/180 & 0.00 \\
Added $f_0$, $q_\alpha$ / singleton & 140/180 / 97/180 & +23.89 \\
Added $f_0$, $L_hq_\alpha$ / singleton & 143/180 / 126/180 & +9.44 \\
\bottomrule
\end{tabular}
\end{table}

In Table~\ref{tab:fixed-scope}, the added-$f_0$ configuration minus the singleton has 95\% shared-reset
intervals [18.89,28.89] pp at $q_\alpha$ and [5.56,13.89] pp at $L_hq_\alpha$.
Because adding $f_0$ also changes constraint cardinality, these quantify
the paired configuration differences, not isolated alignment effects.

\subsection{Limitations and negative results}
Candidate-only matches Static at 144/180 safe episodes in the fixed cohort,
while margin-only and Joint reach 161/180. ACI and CPSF-style comparisons are
one-step interface comparisons rather than multi-step viability-filter
reproductions. Adaptive and Envelope relax the formal alignment, scaling,
feasibility, or fallback premises, so their safety remains empirical.

\subsection{Assessment of theoretical assumptions}
Reference-score exchangeability and the deployment TV radius $\epsilon$ are
theorem inputs; neither is estimated by the rollout matrix. In the aligned
180-episode study, the exact $f_0$ constraint and $m=L_hq_\alpha$ relation are
trace-checked on all 90,000 steps with no missing or margin-mismatch rows. This
verifies only the structural premises. The same analysis records 1,224 fallback
invocations, including 812 steps where emergency braking leaves a negative
robust constraint, so premise-preserving fallback is not established. The
aligned result therefore remains empirical. An analysis of existing traces checks
$V_t\leq C_t+P_t$ on 90,000 archived steps per method; zero violations occur
with both residual coverage and the nominal premise satisfied. Full $(V,C,P)$,
fallback-step and fallback-episode counts are available in the accompanying
transition-level analysis, which does not estimate
IID/TV assumptions or a closed-loop guarantee.

The added-$f_0$ diagnostic is not a default-grid insertion experiment. Its non-aligned
configuration is a singleton with mass 1500 kg, grade $0^\circ$, and actuator
gain 1.0. The aligned configuration adds the exact score-defining $f_0$.
The development-set diagnostic contains actions for which the
singleton candidate passes and $f_0$ fails at the same margin, confirming
implementation-level non-equivalence. Because cardinality also changes from
one to two constraints, the safety difference is not an isolated alignment
effect.

\subsection{Adaptive execution order}
At step $t$, grade compatibility is applied to the selector's current set before
projection. After each completed transition, margin updates precede selector
updates; both affect step $t+1$. Selection
starts after 20 transitions, uses a 40-transition window, and is recomputed every
ten transitions. Outside a recovery period, an upward crossing of $2q_\alpha$ restores the selector's
full grid for 20 completed transitions; pruning is suspended during this counter,
while the per-step grade filter remains active. Pruning resumes at the next
eligible ten-transition update. ACI changes only its quantile margin;
CPSF-style forms its one-step set at the current policy proposal.

\subsection{Reproducibility and baseline contracts}
The matrix uses four physical conditions, brake and stop--go motion,
25 resets per cell and $H=500$ (Sec.~\ref{sec:supp-protocol}). Policies observe
normalized speed error, closing speed, gap, grade and previous action; filters
receive simulator state. The 500 disjoint nominal calibration transitions give
$q_\alpha=0.0918766$ at $\alpha=0.1$. Episode safety requires nonnegative
headway margin and no collision ($g\leq0.5$ m). Infeasibility or failed checks
invoke bounded braking.

ACI initializes $\alpha_0=0.10$, forms $I_t=1\{e_t>q_{\alpha_t}\}$, updates
$\alpha_{t+1}=\Pi_{[.01,.50]}[\alpha_t+0.01(0.10-I_t)]$, and recomputes the
quantile from the fixed 500-score pool while retaining all eight candidates.
CPSF-style uses
$\mathcal C_t=\{\theta:\|f_\theta(s_t,a_t^\pi)-f_0(s_t,a_t^\pi)\|_2\leq q_0\}$
with $q_0=0.0918766$; an empty set restores the full grid. Both interfaces
share models, solver and fallback; neither reproduces a multi-step filter.